\documentclass[letterpaper, 10 pt, conference]{ieeeconf}

\IEEEoverridecommandlockouts
\usepackage{amsmath}
\usepackage{amssymb}
\usepackage{graphicx}
\usepackage{booktabs}
\usepackage{cite}
\usepackage{xcolor}
\usepackage{hyperref}
\usepackage{fancyhdr}
\hypersetup{colorlinks=true, linkcolor=black, citecolor=black, urlcolor=black}


\newtheorem{theorem}{Theorem}
\newtheorem{corollary}{Corollary}
\newtheorem{remark}{Remark}

\newcommand{\xk}{\boldsymbol{x}_k}
\newcommand{\xkk}{\boldsymbol{x}_{k+1}}
\newcommand{\uk}{\boldsymbol{u}_k}
\newcommand{\taua}{\tau^{a}_{k}}
\newcommand{\taud}{\tau^{d}_{k}}
\newcommand{\barta}{\bar{\tau}^{a}_{k}}
\newcommand{\V}{V}
\newcommand{\Vscale}{V_{\mathrm{scale}}}
\newcommand{\att}{\mathrm{att}}
\newcommand{\MSI}{\mathrm{MSI}}
\newcommand{\msictrl}{\mathrm{MSI}_{\mathrm{ctrl}}}
\newcommand{\rhojam}{\rho_{\mathrm{jam}}}
\newcommand{\jtf}{\mathrm{jtf}}
\newcommand{\Tset}{\mathcal{T}}

\title{\LARGE \bf
Inverting Self-Triggered Control: Adversarial Reinforcement Learning for Sparse Denial-of-Service Attacks
}

\author{Adam Haroon$^{1}$, Erick J.\ Rodr\'iguez-Seda$^{2}$, Tristan Schuler$^{3}$, and Cody Fleming$^{1}$%
\thanks{$^{1}$A. Haroon and C. Fleming are with the Department of Mechanical
        Engineering, Iowa State University, Ames, IA, USA
        {\tt\small aharoon@iastate.edu}}%
\thanks{$^{2}$E. J. Rodr\'iguez-Seda is with the Department of Weapons,
        Robotics, and Control Engineering, United States Naval Academy,
        Annapolis, MD, USA}%
\thanks{$^{3}$T. Schuler is with the Navy Center for Applied Research in
        Artificial Intelligence (NCARAI), U.S.\ Naval Research Laboratory,
        Washington, D.C., USA}%
\thanks{The views expressed in this document are those of the author(s) and do not reflect the official policy or position of the U.S. Naval Academy, Department of the Navy, the Department of War, or the U.S. Government.}
}

\begin{document}

\fancypagestyle{firstpage}{
    \fancyhf{} 
    \fancyfoot[C]{DISTRIBUTION STATEMENT A: Approved for public release, distribution is unlimited.} 
    \renewcommand{\headrulewidth}{0pt}
}
\pagestyle{plain}

\maketitle
\thispagestyle{firstpage}
\pagestyle{empty}

\begin{abstract}
Self-triggered reinforcement learning control (RL-STC) learns the
sparsest control schedule that preserves Lyapunov-decreasing
stability under a Run-Time Assurance (RTA) override. We invert
this: an adversarial RL agent learns the sparsest jamming or Denial-of-Service (DoS) schedule
that destabilizes the closed loop, with a Lyapunov-increase
admissibility predicate mirroring the defender's safety
certificate. We prove a plant-property lower bound on the
minimum jam count required for an immediate hold-last medium-access-control 
adversary to force a crash against a self-triggered controller (STC)
satisfying a Lyapunov contract, and recover a certificate-level
analog of the consecutive-grouping optimality of prior
count-budget DoS scheduling as a corollary. This extends the
DoS-scheduling count-budget analysis from periodic and linear-time-invariant to
STC controllers. Empirically, we train against four fixed
defenders per plant (one Linear Quadratic Regulator (LQR) and three RL-STC) on Pendulum, CartPole,
and Quadrotor2D. The learned adversary is the only adversary
that crashes every defender on every plant at $100\%$: greedy
misses Quadrotor2D LQR on $42\%$ of episodes and periodic misses
Pendulum LQR on $97\%$. On jam-time-per-failure it beats
baselines by up to $2.8\times$, and shows its widest absolute margin on
Quadrotor2D LQR. Robustness ablations show that Gaussian observation noise
exceeding the initial-state magnitude and position-only observation
both preserve $100\%$ failure rate and keep the learned adversary
strictly ahead of both baselines on jam-time-per-failure.
\end{abstract}

\section{Introduction}

A Denial-of-Service (DoS) adversary on a networked cyber-physical
system destabilizes the plant by withholding control updates
alone, without ever physically touching it. Certifying such
systems requires understanding the most resource-efficient
adversary, not only the worst-case continuous jammer. On a
shared wireless channel, communication is a scarce resource on
both sides: the defender expends sensing, computation, and
bandwidth on every decision; the adversary expends presence and
energy on every jam. The Self-triggered Reinforcement Learning
Control (RL-STC) framework of \cite{haroon2026learning}
produces a communication-efficient defender by learning the
sparsest transmission schedule that preserves Lyapunov-decreasing
stability. This paper asks the symmetric question: what does a
communication-efficient adversary look like when we invert the
same self-triggered structure onto the attack side?

Optimal-DoS-scheduling theory addresses a related question for
periodic and linear-time-invariant (LTI) controllers.
Prior work gives consecutive-grouping optimality under count-budget and
packet-dropping models \cite{zhang2015optimal, qin2018optimal};
\cite{yang2024optimal} extends this to time-varying interference.
A parallel line studies stability under frequency- and
duration-limited DoS \cite{wakaiki2020resilient,
feng2021jointly, defrancesco2015dos, cetinkaya2019stochastic}.
Recent Reinforcement Learning (RL) based work game-theorizes
attacker-defender interactions for Linear Quadratic Regulator
(LQR) control \cite{xing2025denial} and applies deep RL to
multi-channel jamming of remote state estimation and
event-triggered control \cite{xue2022jamming, hou2022deep,
geng2023security}. Surveys cover the
wireless-jamming taxonomy \cite{pirayesh2022jamming} and the
broader RL for cyber-physical systems (CPS) security landscape
\cite{huang2022reinforcement, cetinkaya2019analysis}. Across
this work the adversary's schedule is fixed, grid-scheduled, or
energy-budgeted; none of it produces a self-triggered adversary
against a self-triggered defender.

This paper contributes that construction. We invert RL-STC's
self-triggered structure onto the adversary and produce the
resulting attacker in two coupled components.
\emph{Empirically}, a Deep Q-Network (DQN) adversary acts on
the same discrete grid as the RL-STC defender, with a per-step
reward that mirrors the defender's: Lyapunov-increase flag,
graded Lyapunov term, jam-activity penalty $w_a$, and crash
bonus. A Lyapunov-increase admissibility predicate provides the
structural inverse of the Run-Time Assurance (RTA) certificate
of \cite[Prop.~1]{haroon2026learning}. We train against four
fixed defenders per plant (one LQR and three RL-STC at
Pareto-spread $w_c$ values) on Pendulum, CartPole, and
Quadrotor2D, sweeping $w_a$ across two orders of magnitude.
\emph{Theoretically}, we prove a plant-property lower bound
(Theorem~\ref{thm:bridge}) on the minimum jam count required
for an immediate hold-last Medium Access Control (MAC) adversary
to force a crash against any Self-Triggered Controller (STC)
satisfying a Lyapunov contract, and recover a certificate-level
analog of the consecutive-grouping optimality of
\cite{zhang2015optimal} as Corollary~\ref{cor:consecutive}. This
extends the count-budget analysis of
\cite{zhang2015optimal, qin2018optimal} from periodic and LTI
controllers to STC.

Three findings drive the rest of the paper. First,
the learned adversary is the only adversary that crashes every
defender on every plant at $100\%$: predicate-greedy misses
Quadrotor2D LQR on $42\%$ of episodes and periodic misses
Pendulum LQR on $97\%$. Second, on jam-time-per-failure
($\jtf$) the learned adversary strictly beats the best baseline
on all four CartPole defenders and on Quadrotor2D LQR
($2.6\times$ faster than either baseline, $2.7\times$ than greedy), and
matches greedy
on Pendulum where two-state dynamics leave predicate-greedy
near-optimal. Third, on Quadrotor2D under realistic
drone-adversary sensing constraints, the learned adversary
retains $100\%$ failure rate and strict $\jtf$ dominance under
Gaussian observation noise exceeding the initial-state magnitude and
under position-only observation.

\section{Problem Formulation}

\begin{figure}[t]
\centering
\includegraphics[width=\columnwidth]{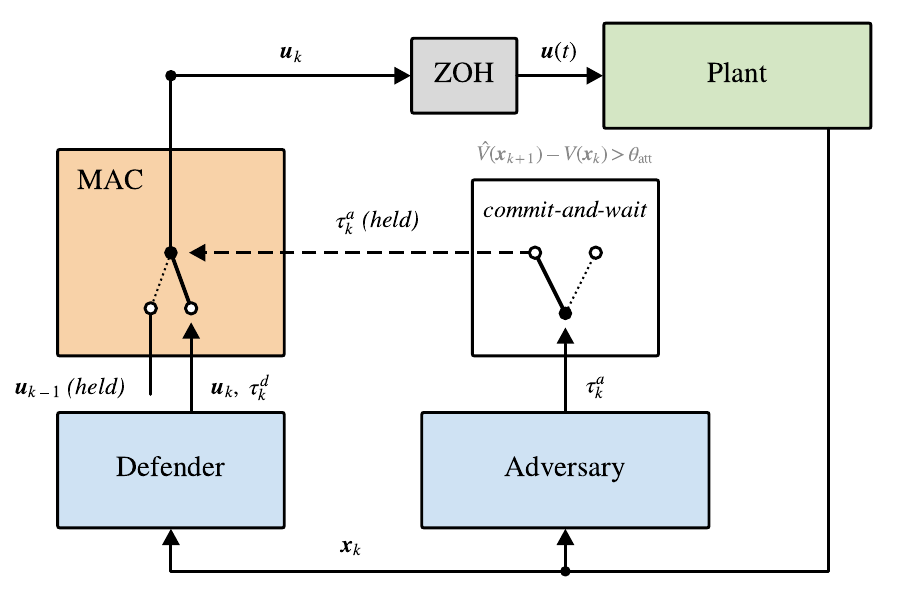}
\caption{Self-triggered \emph{adversary} framework, the structural
inverse of the RTA defender of \cite{haroon2026learning}. A fixed
defender outputs $(\uk,\taud)$; the learned adversary commits a jam
duration $\taua$ at each decision instant and holds it over the
resulting window. The Lyapunov-increase predicate
$\hat\V(\xkk)-\V(\xk) > \theta_{\att}$ is the adversary-side analog of
the defender's RTA certificate; it defines the predicate-greedy
baseline, while the learned adversary is left unshielded
(Sec.~\ref{ssec:reward}). Under the
hold-last MAC model a committed jam latches
$\boldsymbol{u}_{k-1}$ at the actuator, while $\taua = 0$ passes the
fresh update; both policies close the loop on the shared state
$\xk$.}
\label{fig:adversary_scheme}
\end{figure}

We adopt the plant, defender, and clock of~\cite{haroon2026learning} unchanged and add the adversary as
a symmetric self-triggered agent on the same discretization grid.
Fig.~\ref{fig:adversary_scheme} shows the resulting jammer framework.

\subsection{Nonlinear plant}

We consider a continuous, data-sampled, nonlinear dynamic system of the form
\begin{align}
    \dot{\boldsymbol x}(t) =& f(\boldsymbol x(t),\boldsymbol {u}_k) 
    \label{eq:nonlinear_system}
\end{align}
where $\boldsymbol x \in \mathbb{R}^n$ is the state vector, $\boldsymbol u \in{\cal U}\subset \mathbb{R}^m$ is the control input, and $f: \mathbb{R}^n\times {\cal U} \rightarrow \mathbb{R}^n $ is continuously differentiable. It is assumed that there is a bounded data-sample control law $\boldsymbol u(t)=\boldsymbol u_k:= \boldsymbol u(t_k)$ $\forall ~t\in [t_k,t_{k+1})$ capable of stabilizing the system around an equilibrium point $\boldsymbol x_{\mathrm{eq}}$.
The defender, adversary, and clock all act on this plant through the
sampled input $\boldsymbol u_k$.

\subsection{Defender (RL-STC baseline)}
\label{ssec:defender}

At decision instant $t_k$ the defender selects $(\uk, \taud)$
with $\taud \in \Tset = \{\tau_{\min}, 2\tau_{\min}, \dots,
N_{\tau}\tau_{\min}\}$ and $\tau_{\max} := N_{\tau}\tau_{\min}$, held by
zero-order hold over $[t_k, t_k + \taud)$; $N_{\tau}$ is the grid size, reserved throughout from the jam
count $N$ of Sec.~\ref{ssec:game}. An RTA layer overrides the action
with a clipped LQR backup $\uk \leftarrow \mathrm{clip}(-K\xk)$
at $\taud \leftarrow \tau_{\min}$ whenever the linearized
one-step safety prediction leaves the safe set.
\cite[Prop.~1]{haroon2026learning} certifies that an unsaturated
backup is Lyapunov-decreasing under the quadratic Lyapunov function
$\V(\boldsymbol x) = \boldsymbol x^\top P \boldsymbol x$, where $P$ solves the Continuous
Algebraic Riccati Equation (CARE) for the linearization $(A, B)$ of
\eqref{eq:nonlinear_system} with input weight $R$, and
$K = R^{-1} B^\top P$ is the corresponding LQR gain. The defender reward sums a
Lyapunov-decrease flag, a graded term $1 - \V(\xkk)/\Vscale$
($\Vscale$ fixed per plant), a communication term
$w_c((\MSI_k - \tau_{\min})/(\tau_{\max}-\tau_{\min}))^2$ on the
mean sampling interval $\MSI$ (the causal moving average of the
applied $\taud$), a penalty $r_{\mathrm{safe}}$ on RTA activation, and a crash
penalty of $-1000$. A \emph{crash} is the moment the plant's
state vector reaches a predefined unrecoverable unsafe set.

\subsection{Self-triggered adversary}
\label{ssec:reward}

The adversary selects a jam duration
$\taua \in \{0\} \cup \Tset$ at every commit-and-wait decision
instant. Over $[t_k, t_k + \taua)$ no control update reaches the
actuator, which holds the last applied input. $\taua = 0$ admits the
defender update at $t_k$ to pass unjammed; $\taua = \ell\tau_{\min}$
blocks the channel for $\ell$ ticks of the underlying grid before the
adversary decides again. The threat model is a MAC layer drop with a held actuator: the actuator latches the
most recent successfully delivered $\uk$ while jammed.

The adversary's per-step reward mirrors the defender's:
\begin{equation}
r^a_k = r^a_{\mathrm{stab}} + \frac{\V(\xkk)}{\Vscale}
        - w_a \left(\frac{\barta}{\tau_{\max}}\right)^2
        + r_{\mathrm{esc}}\,[\mathrm{terminated}],
\label{eq:adversary_reward}
\end{equation}
where $\barta$ is the causal exponential moving average
$\barta \leftarrow ((n_{\MSI}-1)\barta + \taua)/n_{\MSI}$ with
$n_{\MSI} = 5$ (reserved from the state dimension $n$) and
$\bar\tau^a_0 = 0$, the same recursion the defender's $\MSI_k$ uses in
\cite{haroon2026learning};
$r^a_{\mathrm{stab}} = +1$ if $\V(\xkk) > \V(\xk)$ and $-1$
otherwise; $r_{\mathrm{esc}}$ is the crash bonus. $w_a$ mirrors
$w_c$ and sweeps the adversary frontier. The admissibility
predicate of Sec.~\ref{ssec:predicate} is not enforced on the
learned adversary; a hard shield would forbid near-equilibrium
jams that lack a one-step Lyapunov increase but are needed to
drive the system out.

\subsection{Lyapunov-increase admissibility predicate}
\label{ssec:predicate}

The defender's RTA fires when the predicted safety scalar leaves
the safe set. The adversary-side analog admits a jam only if a
one-tick probe under the held input shows destabilization:
\begin{equation}
\hat\V(\xkk) - \V(\xk) > \theta_{\att},
\label{eq:predicate}
\end{equation}
where $\hat\V(\xkk) = \hat{\xkk}^\top P \hat{\xkk}$ and $\hat{\xkk}$
integrates the plant's nonlinear dynamics forward over a single
$\tau_{\min}$ under $\boldsymbol u_{\mathrm{last}}$, on the same
semi-implicit Euler substep the simulator uses. The probe is one tick
deep whatever duration the adversary then commits.
This is the structural inverse of Proposition~1's conclusion in
\cite{haroon2026learning}. We fix $\theta_{\att} = 0$
throughout, so any predicted Lyapunov increase passes.

\subsection{Two-player game and metrics}
\label{ssec:game}

Defender and adversary share $\xk$ and play $(\uk, \taud, \taua)$
with rewards $r^d_k, r^a_k$. Fixing the defender at one of four
representative policies reduces the co-trained game to a
single-player Markov Decision Process (MDP) for the adversary.
Over an episode of $T_{\mathrm{ep}}$ grid ticks with block
indicators $b_k \in \{0, 1\}$ and jam count $N = \sum_k b_k$, we
report: $\msictrl$ (defender's nominal mean inter-sample
interval, in seconds); $\rhojam = N/T_{\mathrm{ep}}$ (fraction of
episode ticks blocked); and $\jtf = \tau_{\min} N$ averaged over
crashing episodes.
$\rhojam^{\mathrm{fail}}$ restricts $\rhojam$ to crashing
episodes.

\begin{remark}[Jam-time / attack-energy identity]
\label{rem:energy}
Under the hold-last MAC model, a constant-power jammer satisfies
$\mathrm{energy} = P_{\mathrm{jam}} \tau_{\min} N$ (with
$P_{\mathrm{jam}}$ the transmit power, distinct from the Lyapunov
matrix $P$): jam time, jam fraction, and expended energy are
affine in $N$. $N$ is thus the count budget
of \cite{zhang2015optimal, qin2018optimal} up to a per-jam energy
constant, so $\jtf$ is attack energy per successful
destabilization and the $w_a$ sweep is a Lagrangian traversal of
that budget.
\end{remark}

\section{Minimum-Jam-Count Bound for STC}
\label{ssec:bridge}

Remark~\ref{rem:energy} makes $N$ the count budget of
\cite{zhang2015optimal, qin2018optimal}, but their optimality
results assume periodic or LTI controllers. We bridge to STC:
for any STC controller (\cite{wang2011stc} introduced the
self-triggered paradigm; \cite{heemels2012stc} gives the
exponential-decay Lyapunov contract we use, their eq.~14),
we derive a plant-property lower bound on $N$ under a hold-last
MAC adversary.

\subsection{Lower bound and corollaries}

Fix the plant with linearization $\dot{\boldsymbol x} = A \boldsymbol x + B \boldsymbol u$ around an
equilibrium $(\boldsymbol x_{\mathrm{eq}}, \boldsymbol u_{\mathrm{eq}})$ satisfying
$A \boldsymbol x_{\mathrm{eq}} + B \boldsymbol u_{\mathrm{eq}} = \boldsymbol 0$, safety Lyapunov
$V(\boldsymbol x) = (\boldsymbol x - \boldsymbol x_{\mathrm{eq}})^\top P (\boldsymbol x - \boldsymbol x_{\mathrm{eq}})$ with
$P \succ 0$, and safety threshold $V_{\mathrm{crit}}$ such that
the safety region is $\{\boldsymbol x : V(\boldsymbol x) \le V_{\mathrm{crit}}\}$.
Since $\boldsymbol x_{\mathrm{eq}} = \boldsymbol 0$ on all three plants, this $V$ is
the $\V$ of Sec.~\ref{ssec:defender}.
Let $\Phi(\tau) := e^{A\tau}$ be the transition matrix of the
autonomous linearization; when held-last input equals
$\boldsymbol u_{\mathrm{eq}}$ (which any Lyapunov-stabilizing controller
supplies at the equilibrium), the jam-window dynamics reduce to
$\dot{(\delta \boldsymbol x)} = A\, \delta \boldsymbol x$ with
$\delta \boldsymbol x := \boldsymbol x - \boldsymbol x_{\mathrm{eq}}$, and
$\delta \boldsymbol x(t+\tau) = \Phi(\tau)\, \delta \boldsymbol x(t)$. Non-equilibrium
held inputs add a bounded affine translation that we absorb into
the empirical margin of Table~\ref{tab:bridge-validation}.

Define the per-tick growth factor in the $P$-metric:
\begin{equation}
G_1 \;:=\;
\sup_{\boldsymbol x \ne \boldsymbol 0}
\sqrt{\frac{\boldsymbol x^{\!\top}\!\Phi(\tau_{\min})^{\!\top}\! P\, \Phi(\tau_{\min})\, \boldsymbol x}
           {\boldsymbol x^{\!\top}\! P\, \boldsymbol x}}.
\label{eq:G1}
\end{equation}
$G_1$ is a plant property depending only on $A$, $P$, and
$\tau_{\min}$: the operator norm of $\Phi(\tau_{\min})$ in the
$P$-norm. It handles both exponentially growing plants
($G_1 \sim \exp(\alpha \tau_{\min})$ with $\alpha$ the dominant
unstable-mode rate) and polynomially growing ones such as
integrator chains ($G_1 > 1$ from Jordan-block transient
amplification at spectral radius $1$).

Thm.~\ref{thm:bridge} below bounds an \emph{immediate}
adversary whose jam sequence begins at $t = 0$ from the plant's
reset distribution, without pre-conditioning on observed closed-loop
state; Sec.~\ref{sec:learned-vs-immediate} compares the bound, and that
adversary's measured threshold, to the learned adversary.

\begin{theorem}[Immediate-jam crash-count bound]
\label{thm:bridge}
Represent the plant of \eqref{eq:nonlinear_system} over a jam window by
its linearization about the equilibrium (fixed above; the
experiments of Sec.~\ref{sec:experiments} simulate the full nonlinear
dynamics). Let an STC controller satisfy the exponential-decay Lyapunov
contract $V(\hat{\boldsymbol x}(t_k + \tau^d)) \le V(\boldsymbol x_k)\exp(-\lambda
\tau^d)$ at each unjammed decision (as in
\cite[eq.~14]{heemels2012stc}), and consider a hold-last MAC
adversary whose first jam decision occurs at $t = 0$ from initial
state $\boldsymbol x_0$ with $V(\boldsymbol x_0) = V_0 < V_{\mathrm{crit}}$. Let $G_1$ be
defined by \eqref{eq:G1} and assume $G_1 > 1$ (the held-input
mode is $P$-expansive; otherwise no finite jam count crashes the
plant). Then the minimum tick count $N^{\star}$ in a single
contiguous jam block required to drive $V(\boldsymbol x_T) \ge
V_{\mathrm{crit}}$ satisfies
\begin{equation}
N^{\star} \;\ge\; \left\lceil
\frac{\log\!\left(V_{\mathrm{crit}} / V_0\right)}
     {2\,\log G_1}
\right\rceil.
\label{eq:bridge-bound}
\end{equation}
\end{theorem}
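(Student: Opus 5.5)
The plan is to bound the growth of $V$ tick by tick during the contiguous jam block, and then invert that bound. I work in the linearized jam-window model fixed above. The held input is $\boldsymbol u_{\mathrm{eq}}$, so over each jammed tick the deviation evolves as $\delta\boldsymbol x(t+\tau_{\min}) = \Phi(\tau_{\min})\,\delta\boldsymbol x(t)$.

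\textbf{Step 1: one-tick growth.} By the definition \eqref{eq:G1}, $G_1$ is the operator norm of $\Phi(\tau_{\min})$ in the norm $\|\boldsymbol z\|_P := \sqrt{\boldsymbol z^\top P \boldsymbol z}$. Since $V(\boldsymbol x) = \|\delta\boldsymbol x\|_P^2$, one jammed tick gives
\[
V(\boldsymbol x(t+\tau_{\min})) \le G_1^2\, V(\boldsymbol x(t)).
\]

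\textbf{Step 2: iterate over the block.} Submultiplicativity gives $\|\Phi(\ell\tau_{\min})\|_P = \|\Phi(\tau_{\min})^\ell\|_P \le G_1^\ell$. For a block of $N$ ticks starting at $t=0$ from $\boldsymbol x_0$, this yields $V(\boldsymbol x_T) \le G_1^{2N} V_0$, where $T = N\tau_{\min}$.

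Interleaved unjammed decisions cannot raise $V$. Under the Heemels contract they multiply $V$ by $\exp(-\lambda\tau^d) \le 1$. So even if one allowed them, the bound $V \le G_1^{2N}V_0$ would persist, and this is the certificate-level hook the corollary will later use.

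Intermediate times inside a tick also need to be covered. Here I take crash to be checked on the grid, consistent with $N$ counting ticks. Otherwise I would absorb the sub-tick supremum into $G_1$, using $\sup_{s\le\tau_{\min}}\|\Phi(s)\|_P$.

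\textbf{Step 3: invert.} A crash requires $V(\boldsymbol x_T)\ge V_{\mathrm{crit}}$, hence $G_1^{2N^\star}V_0 \ge V_{\mathrm{crit}}$. The assumption $G_1>1$ makes $\log G_1>0$, so dividing gives
\[
N^\star \ge \frac{\log(V_{\mathrm{crit}}/V_0)}{2\log G_1}.
\]
Because $N^\star$ is an integer, the ceiling in \eqref{eq:bridge-bound} follows.

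The parenthetical claim is handled the same way. If $G_1\le1$, then $V$ is nonincreasing along jammed ticks and contracting along unjammed ones, so $V$ stays at or below $V_0<V_{\mathrm{crit}}$ and no finite count crashes the plant.

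\textbf{Main obstacle.} The step I expect to be delicate is not the algebra but justifying the reduction to $\dot{\delta\boldsymbol x} = A\,\delta\boldsymbol x$. This needs two things: the held input equals $\boldsymbol u_{\mathrm{eq}}$, and the linearization is valid over the whole window. I will state both explicitly as hypotheses of the model, as the theorem's first sentence does, rather than prove them.

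A non-equilibrium held input would add an affine term $\int_0^{\tau}\Phi(s)B(\boldsymbol u_{\mathrm{last}}-\boldsymbol u_{\mathrm{eq}})\,ds$. That term can only be bounded additively, not multiplicatively, so it is deferred to the empirical margin, as the text above indicates.
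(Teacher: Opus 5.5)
Your proposal is correct and follows the paper's proof essentially step for step. You use the one-tick $P$-norm growth $G_1$, then submultiplicativity to reach $V(\boldsymbol x_T)\le V_0 G_1^{2N}$, then inversion using $\log G_1>0$, plus the same side observation that unjammed epochs contribute factors $e^{-\lambda\tau^d}\le 1$. Your extra remarks (crash checked on the grid, integrality justifying the ceiling, and the $G_1\le 1$ parenthetical) are minor refinements that the paper leaves implicit.
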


\begin{proof}
Working in $\delta \boldsymbol x$-coordinates, held-last input at
$\boldsymbol u_{\mathrm{eq}}$ gives autonomous jam-window dynamics
$\delta \boldsymbol x_T = \Phi(N\tau_{\min})\, \delta \boldsymbol x_0 =
\Phi(\tau_{\min})^N \delta \boldsymbol x_0$. Submultiplicativity of the
$P$-metric operator norm yields
$\|\Phi(\tau_{\min})^N\|_P \le \|\Phi(\tau_{\min})\|_P^N = G_1^N$,
hence
\begin{equation*}
V(\boldsymbol x_T) \;=\; \delta \boldsymbol x_T^\top P\, \delta \boldsymbol x_T \;\le\;
G_1^{2N}\, \delta \boldsymbol x_0^\top P\, \delta \boldsymbol x_0 \;=\; V_0\, G_1^{2N}.
\end{equation*}
A crash requires $V(\boldsymbol x_T) \ge V_{\mathrm{crit}}$, which by the
display forces $V_0 G_1^{2N} \ge V_{\mathrm{crit}}$; since
$G_1 > 1$ gives $\log G_1 > 0$, solving for $N$ and taking the
ceiling yields \eqref{eq:bridge-bound}. The same holds for any
schedule of $N$ jam ticks, not only the contiguous one: unjammed
epochs contribute factors $\exp(-\lambda\tau^d) \le 1$ by the STC
contract, so $V(\boldsymbol x_T) \le V_0 G_1^{2N}$ regardless of schedule and
$V_0 G_1^{2N} \ge V_{\mathrm{crit}}$ stays necessary for a crash.
\end{proof}

\begin{remark}
\label{rem:affine}
Beyond the linearization, a non-equilibrium held input
$\boldsymbol u_{\mathrm{last}}$ adds the affine term
$(\int_0^{\tau}\Phi(\tau-s)\,ds)B(\boldsymbol u_{\mathrm{last}}-\boldsymbol u_{\mathrm{eq}})$
to the jam-window map; over a window of $N$ ticks this is a
bounded translation of norm $O(\|\boldsymbol u_{\mathrm{last}} -
\boldsymbol u_{\mathrm{eq}}\|)$, which shifts $V_0$ by a bounded amount
absorbed into the empirical margin of
Table~\ref{tab:bridge-validation}. The linearization is exact only to
first order in $\delta \boldsymbol x$, so \eqref{eq:bridge-bound} certifies the
linearized model while Table~\ref{tab:bridge-validation} measures the
full nonlinear dynamics; that margin is consistent with a small combined
higher-order and affine correction but does not isolate it.
\end{remark}

Two corollaries follow.

\begin{corollary}[Consecutive-jam optimality on STC]
\label{cor:consecutive}
For a schedule $\pi$ with $N$ jam ticks and unjammed epochs of
durations $\{\tau_j^d\}_{j=1}^{M}$, the certified terminal
Lyapunov bound is
$\overline V(\pi) := V_0\, G_1^{2N}\prod_{j=1}^{M}
e^{-\lambda\tau_j^d}$, i.e.\ $V^\pi(\boldsymbol x_T) \le \overline V(\pi)$.
Among all schedules of fixed count $N$, $\overline V$ is
maximized by the contiguous block ($M = 0$), with value
$V_0 G_1^{2N}$.
\end{corollary}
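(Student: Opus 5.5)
The plan has two parts: first establish that $\overline V(\pi)$ is a valid upper bound on $V^\pi(\boldsymbol x_T)$, then maximize it over schedules of fixed count $N$.

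For the first part I reuse the schedule-agnostic step at the end of the proof of Theorem~\ref{thm:bridge}. Decompose an arbitrary schedule $\pi$ with $N$ jam ticks into a concatenation of maximal jam blocks and unjammed epochs. Suppose a jam block of $\ell$ ticks is entered at state $\delta\boldsymbol x$. Over the linearized jam window with held input at $\boldsymbol u_{\mathrm{eq}}$, the state is propagated by $\Phi(\tau_{\min})^{\ell}$. By submultiplicativity of the $P$-operator norm and the definition \eqref{eq:G1}, $V$ is therefore multiplied by at most $G_1^{2\ell}$.

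Now suppose an unjammed epoch of duration $\tau_j^d$ is entered at decision state $\boldsymbol x_k$. The STC contract of Theorem~\ref{thm:bridge} gives $V \le V(\boldsymbol x_k)e^{-\lambda\tau_j^d}$ at its end. I will read the contract's predicted $\hat{\boldsymbol x}$ as the realized state under the linearized model, as the theorem does implicitly.

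Chaining these multiplicative bounds by induction over the segments in temporal order gives the result. The exponents of $G_1^2$ sum to $N$, and the decay factors multiply to $\prod_{j=1}^M e^{-\lambda\tau_j^d}$. Hence $V^\pi(\boldsymbol x_T)\le \overline V(\pi)$. Since every factor is a pure multiplier independent of the current state, the ordering of segments does not matter.

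For the maximization, fix $N$. The factor $V_0G_1^{2N}$ is then common to all schedules. Because $\lambda\ge 0$ and every $\tau_j^d\ge\tau_{\min}>0$, each factor satisfies $e^{-\lambda\tau_j^d}\le 1$, so the product is at most $1$. Equality holds when $M=0$, where the empty product equals $1$. Therefore $\overline V(\pi)\le V_0G_1^{2N}=\overline V(\text{contiguous})$ for every $\pi$, which is the claim.

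If $\lambda>0$, the inequality is strict for any $M\ge1$, so the contiguous block is the unique maximizer of the certificate. If $\lambda=0$, it is a maximizer but not necessarily unique. I will note that these are the only two cases.

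The step I expect to be the main obstacle is the interface between segments in the chaining argument. The STC contract is stated at unjammed \emph{decision} instants with the defender's own $\tau^d$. Under a hold-last adversary, however, a jam may interrupt a defender epoch midway, and the defender may resume from a state it did not plan.

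I would handle this by defining unjammed epochs as the maximal intervals over which the defender's update actually reaches the actuator. I would then invoke the contract only on those intervals, starting from whatever state the preceding jam block produced; the contract holds from any $\boldsymbol x_k$. Where an epoch is truncated by a jam, I apply the contract over the delivered portion, or else bound its factor trivially by $1$. Since the corollary only asks for maximization of the certificate $\overline V$, the trivial bound $\le 1$ on each unjammed factor is all that the comparison needs.
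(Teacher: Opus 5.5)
Your proposal is correct and follows the paper's proof: you chain $N$ jam-tick factors of $G_1^{2}$ (by submultiplicativity of the $P$-operator norm) with $M$ contract factors $e^{-\lambda\tau_j^d}$ applied at the realized states, then use $\tau_j^d \ge \tau_{\min} > 0$ to bound each decay factor by $1$. Your separate treatment of $\lambda = 0$ correctly refines the paper's ``strictly below $1$'', which tacitly needs $\lambda > 0$. One caution: your fallback of bounding a truncated epoch's factor ``trivially by $1$'' is not supported, because the contract constrains only the epoch's endpoint; it is cleaner to do as the paper does and take the unjammed epochs to be complete dwell intervals on which the contract applies.
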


\begin{proof}
Write $\|\boldsymbol v\|_P := (\boldsymbol v^\top P \boldsymbol v)^{1/2}$, so
$V = \|\cdot\|_P^2$. Each jam tick applies the linear map
$\Phi(\tau_{\min})$, of $P$-operator norm $G_1$; each unjammed epoch
contracts $\|\cdot\|_P$ by $e^{-\lambda\tau_j^d/2}$ at the realized
state, which is all the composition needs and does not require the
closed-loop STC map to be linear. Chaining these $N + M$ inequalities
along the trajectory and squaring gives
$V^\pi(\boldsymbol x_T) \le V_0 G_1^{2N}
\prod_{j=1}^{M} e^{-\lambda\tau_j^d} = \overline V(\pi)$. Every dwell
time is drawn from $\Tset$, so $\tau_j^d \ge \tau_{\min} > 0$ and each
factor is strictly below $1$; hence
$\overline V(\pi) \le V_0 G_1^{2N} =
\overline V(\pi_{\mathrm{cont}})$, strict whenever $M \ge 1$.
\end{proof}

Corollary~\ref{cor:consecutive} extends the consecutive-grouping
optimality of \cite{zhang2015optimal, qin2018optimal} from
periodic and LTI controllers to STC at the level of the certified
growth bound: their proof requires the LTI closed-loop transition
matrix, whereas ours needs only the STC contract and the $P$-norm
growth factor $G_1$. The bound is order-independent because
submultiplicativity does not depend on segment order; it is
attained exactly when $\|\Phi(\tau_{\min})\|_P$ equals the spectral
radius (as when $\Phi(\tau_{\min})$ is $P$-normal), and the
$G_1$-based certificate is the worst case otherwise.

\begin{corollary}[Plant-property gradient]
\label{cor:gradient}
For fixed $V_0$ and $V_{\mathrm{crit}}$, the pre-ceiling bound
$\log(V_{\mathrm{crit}}/V_0)/(2\log G_1)$ is strictly decreasing
in $G_1$, so $N^{\star}$ is non-increasing in $G_1$.
\end{corollary}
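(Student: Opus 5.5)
The plan is to treat the pre-ceiling bound as a real-valued function of $G_1$ and argue by elementary calculus, then pass monotonicity through the ceiling. For fixed $V_0$ and $V_{\mathrm{crit}}$, write
\begin{equation*}
h(G_1) := \frac{c}{2\log G_1}, \qquad c := \log\!\left(V_{\mathrm{crit}}/V_0\right).
\end{equation*}
The domain is $G_1 > 1$, which is exactly the hypothesis of Theorem~\ref{thm:bridge}. The standing assumption $V_0 < V_{\mathrm{crit}}$ in Theorem~\ref{thm:bridge} gives $c > 0$.

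The first step is to observe that $g(G_1) := 2\log G_1$ is strictly increasing and strictly positive on $(1,\infty)$. Since $c>0$, the map $y \mapsto c/y$ is strictly decreasing on $(0,\infty)$. Composing the two gives that $h$ is strictly decreasing. Equivalently, for $1 < G_1 < G_1'$ we have $0 < \log G_1 < \log G_1'$, hence $h(G_1) > h(G_1')$. One could also differentiate, $h'(G_1) = -c/(2 G_1 (\log G_1)^2) < 0$, but the composition argument avoids any smoothness discussion.

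The second step transfers this to the integer quantity. The ceiling function is non-decreasing, so $\lceil h(\cdot)\rceil$ is non-increasing in $G_1$. Strictness is generally lost here, because nearby values of $h$ can round to the same integer, and this is why the corollary claims only ``non-increasing.'' The claim is about the certified lower bound in \eqref{eq:bridge-bound}, which I read as what ``$N^\star$ is non-increasing'' refers to. I would state explicitly that the monotone object is the right-hand side of \eqref{eq:bridge-bound}, since the true minimum $N^\star$ of a given plant also depends on $A$ beyond $G_1$.

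The main obstacle is therefore interpretive rather than technical. Comparing $N^\star$ across plants with different $G_1$ only makes sense for the bound, so I would phrase the conclusion as: the certified minimum jam count $\lceil h(G_1)\rceil$ is non-increasing in $G_1$. I would also note the degenerate edge case: if $V_0 = V_{\mathrm{crit}}$ were allowed, then $c = 0$ and $h \equiv 0$, so strictness would fail. This case is excluded by $V_0 < V_{\mathrm{crit}}$.
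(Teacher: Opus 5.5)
Your proof is correct and follows the paper's argument: $\log(V_{\mathrm{crit}}/V_0)>0$ is fixed while $\log G_1$ is positive and strictly increasing on $G_1>1$, so the ratio strictly decreases, and the monotone ceiling turns this into a non-increasing integer bound. Your caveat is a correct sharpening of the paper's one-line ``hence $N^{\star}$ is non-increasing'': the monotone object is the right-hand side of \eqref{eq:bridge-bound}, i.e.\ the predicted $N^{\star}$ of Table~\ref{tab:bridge-validation}, not the true minimum jam count, which a lower bound alone cannot order.
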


\begin{proof}
$\log(V_{\mathrm{crit}}/V_0) > 0$ is fixed and $\log G_1$ is positive
and strictly increasing for $G_1 > 1$, so the ratio strictly
decreases; the ceiling is monotone, hence $N^{\star}$ is
non-increasing.
\end{proof}

Corollary~\ref{cor:gradient} is a \emph{ceteris paribus}
statement. Across plants $V_{\mathrm{crit}}/V_0$ also varies, so
$N^{\star}$ need not order by $G_1$ alone (Table~\ref{tab:bridge-validation}:
the Pendulum's larger $\log(V_{\mathrm{crit}}/V_0)$ gives it the largest
$N^{\star}$ despite the smallest $G_1$, and the CartPole's small ratio the
smallest $N^{\star}$ despite an intermediate $G_1$). The
$\jtf$ trend, however, is monotone across the three LQR points
($0.755 < 0.876 < 1.136$ s as $G_1$ decreases from $1.26$ to
$1.22$), a suggestive but small-$n$ signature of the gradient.

\subsection{Empirical validation}
Table~\ref{tab:bridge-validation} reports $G_1$ and predicted
$N^{\star}$ per plant against a controlled immediate-jam attacker
that begins at $t = 0$ from the plant's reset distribution and
applies a single contiguous jam block of $N$ ticks, matching the
theorem's assumption class. For each plant we sweep $N$ and
record the smallest $N$ yielding crash rate $\ge 95\%$ over $100$
evaluation episodes against the periodic LQR defender, which
satisfies the Lyapunov contract trivially at
$\tau^d = \tau_{\min}$. $V_0$ is the sample mean of $V(\boldsymbol x_0)$ over
$100$ resets; $V_{\mathrm{crit}}$ is the minimum $V$ over the
plant's safety-region boundary.

\begin{table}[t]
\caption{Empirical validation of Theorem~\ref{thm:bridge} against
an immediate consecutive-jam attacker on the periodic LQR
defender. }
\label{tab:bridge-validation}
\begin{minipage}{\linewidth}
\centering
\footnotesize
\setlength{\tabcolsep}{4pt}
\begin{tabular}{lcccc}
\toprule
Plant & $G_1$ & $\log\!\frac{V_{\mathrm{crit}}}{V_0}$ &
  $N^{\star}$ (pred.\footnote{All counts in $\tau_{\min}$ ticks. Predicted
$N^{\star}$ computed from \eqref{eq:bridge-bound}; $G_1$, $V_0$ and
$V_{\mathrm{crit}}$ are produced by
\texttt{stage1/compute\_bridge\_theory\_vals.py}. Measured $N$ is the
smallest jam count achieving $\ge 95\%$ crash rate over $100$
evaluation episodes; the Quadrotor2D scan reaches only $91\%$ at
$N=80$, where it was stopped.}) & $N$ (meas.$^a$, $95\%$) \\
\midrule
Pendulum    & $1.22$ & $3.29$ & $\phantom{0}9$ & $30$ \\
CartPole    & $1.23$ & $1.02$ & $\phantom{0}3$ & $30$ \\
Quadrotor2D & $1.26$ & $2.21$ & $\phantom{0}5$ & $>80$ \\
\bottomrule
\end{tabular}
\end{minipage}
\end{table}

The bound holds on every plant, with measured $N \ge N^{\star}$
throughout at a ratio of at least $3.3\times$ (the Quadrotor2D entry is
right-censored, so no upper end is measured). The two counts answer
different questions, so a ratio above one is expected rather than
slack. $N^{\star}$ is a \emph{floor} on the certified model: no jam
sequence of fewer ticks can carry $V$ from $V_0$ past
$V_{\mathrm{crit}}$, in any direction and under any schedule. The
measured $N$ is an \emph{operational threshold}: the count at which one
contiguous block crashes $95\%$ of episodes drawn from the reset
distribution. A worst case over all directions and schedules exceeds a
typical case under one schedule even for an exact model, and
Remark~\ref{rem:affine} addresses what the linearization itself adds. The
bound's use is that it is a plant property, computable before any
experiment.

\subsection{Learned adversary vs.\ the immediate-jam bound}
\label{sec:learned-vs-immediate}

The learned adversary of Sec.~\ref{sec:experiments} does not defer its
attack. Against the same LQR defender it fires on the first tick in
every episode on Pendulum and CartPole and commits one contiguous block;
on Quadrotor2D it fires at median tick~$1$, from a state whose Lyapunov
value is $0.87\,V_0$, so \eqref{eq:bridge-bound} applied at fire time
returns a floor one tick \emph{larger} than at reset. Its advantage is
therefore not a better starting state.

The immediate attacker of Table~\ref{tab:bridge-validation} needs $30$,
$30$ and $>80$ ticks ($1.50$, $1.20$ and $>3.20$~s) to crash $95\%$ of
episodes with one contiguous block; the learned adversary crashes on
$1.136$, $0.876$ and $0.755$~s of jam time, i.e.\ $22.7$, $21.9$ and
$18.9$ ticks on average. These are different statistics, a mean over
crashing episodes against the smallest fixed count reaching $95\%$, so
the $24$--$76\%$ gap they imply is mean-versus-threshold rather than
matched. Matched at the $95$th percentile the learned adversary needs
$34$, $35$ and $31$ ticks: on Pendulum and CartPole it is no
cheaper than jamming from $t=0$. There it reproduces the
predicate-greedy schedule of Sec.~\ref{ssec:baselines}, matching greedy's
tick count to within about one tick; what it adds on those plants is
reliability rather than efficiency (Sec.~\ref{sec:experiments}). The robust gap is entirely
on Quadrotor2D, where greedy also fires immediately in one block but needs
$2.7\times$ the jam time at $58\%$ reliability, while the learned adversary
waits a tick, splits its budget across $1.79$ blocks, and crashes every
episode. Multi-step schedule structure, not pre-observation, is what the
immediate-jam threshold leaves unpriced.

\section{Experiments}
\label{sec:experiments}

\subsection{Setup}
\label{ssec:setup}

Plants and defender checkpoints are reused from
\cite{haroon2026learning}: an inverted Pendulum ($n=2$, $m=1$), a
CartPole ($n=4$, $m=1$), and a planar Quadrotor2D ($n=6$, $m=2$), each
run at $\tau_{\min} = 0.04$~s ($0.05$~s on the Pendulum) with
$N_{\tau} = 8$,
and each carrying an angle or position bound as its unsafe set. For each plant we train
$7 \times 4 \times 3 = 84$ adversaries across
$w_a \in \{0.01, 0.1, 0.5, 1.0, 2.0, 5.0, 10.0\}$, four fixed
defenders (a periodic LQR controller, baseline B1 of
\cite{haroon2026learning}, and three RL-STC defenders at
$w_c \in \{1.0, 6.0, 14.0\}$, Pareto-spread points from its
eleven-value $w_c$ sweep), and three seeds. Each adversary is evaluated over 100 episodes
against the three non-learned baselines of
Sec.~\ref{ssec:baselines} and three learned seeds. Periodic and predicate-greedy are deterministic
given the defender; the learned column reports mean $\pm$
cross-seed std.

The primary metric is jam-time-per-failure, the mean
jammed seconds over crashing episodes; by
Remark~\ref{rem:energy} this is attack energy per successful
destabilization. The jam fraction on failures
$\rhojam^{\mathrm{fail}}$ is reported alongside as a secondary
metric. Because periodic sits at $\rhojam^{\mathrm{fail}} \in
[49\%, 82\%]$, jam fraction alone would conflate
structural sparsity with strategic timing. A learned adversary
is \emph{strategic} on a defender when its mean $\jtf$ is
strictly below \emph{both} baselines and its cross-seed std is
small relative to the smaller gap.

\subsection{Adversary training and reward calibration}

The defender set is identical across plants. The
adversary is a DQN over $\{0, \dots, N_{\tau}=8\}$ with learning rate
$10^{-3}$, replay buffer $10^6$, $\gamma = 0.99$, and an
episode-bounded budget of $25{,}000$; the best-per-step
checkpoint is typically reached within $\sim 10\%$ of the
budget.

The terminal bonus $r_{\mathrm{esc}}$ in
(\ref{eq:adversary_reward}) is the one term requiring explicit
calibration. Naively mirroring the defender's $-1000$ penalty
places the terminal at least two orders of magnitude above the
differential jam cost between full-jam and half-jam schedules,
so DQN Q-targets become terminal-dominated and the Bellman
residual on jam-or-not decisions is lost. We set
$r_{\mathrm{esc}} = 100$ across all plants and $w_a$ values:
still exceeds worst-case non-terminal accumulation, but no
longer dominates the per-decision gradient.
Every other term in (\ref{eq:adversary_reward}) is a direct
structural mirror of the defender's reward.

\subsection{Baselines}
\label{ssec:baselines}

We compare the learned adversary to three non-learned adversaries on
every (defender, $w_a$) combination, each evaluated for 100
episodes:

\textit{NoJam.} A no-adversary control ($\taua = 0$) measuring
the defender's nominal behavior in isolation.

\textit{Periodic burst.} A state-independent jammer playing
$\taua = 4\tau_{\min}$ every other decision, giving
$\rhojam^{\mathrm{fail}} \in [49\%, 82\%]$. Non-learned analog
of the fixed-rate schedules contrasted with the grouped optimal
of \cite{zhang2015optimal, qin2018optimal}.

\textit{Predicate-greedy.} A model-based adversary committing the
longest available jam, $\taua = N_{\tau}\tau_{\min}$, whenever the
predicate (\ref{eq:predicate}) holds. It idles until the predicate first
fires and then jams continuously, giving jam fraction
$93$--$100\%$. It computes $\hat\V$ from the same state
$\xk$ the learned adversary observes, so learned-vs-greedy is a
same-information comparison isolating
the value of multi-step strategic timing over a one-step
predicate rule.

\subsection{Pendulum}

Table~\ref{tab:pendulum} reports the Pendulum frontier at
$w_a = 10.0$ (the same $w_a$ used throughout). All four
defenders are NoJam-self-stable ($0\%$ failure). The learned
adversary crashes every defender at $100\%$; predicate-greedy
misses LQR on $13\%$ of episodes and periodic misses on $97\%$,
making learned the sole reliable LQR crasher. On $\jtf$, learned
matches greedy within measurement precision ($1.06$--$1.14$~s
vs.\ $1.04$--$1.08$~s) and beats periodic by
$1.2$--$2.3\times$; on Pendulum, learned's $100\%$ headline
comes from reliability rather than strict $\jtf$ dominance,
since two-state dynamics leave predicate-greedy near-optimal.
On jam fraction, periodic sits at
$56$--$66\%$ on the three RL-STC defenders (sparse by construction; its
LQR figure rests on the $3\%$ of episodes it crashes) while learned and
greedy co-locate at $95$--$100\%$. The Pendulum panel of Fig.~\ref{fig:wa_frontier}
tracks this pattern across the $w_a$ sweep.

\begin{table}[t]
\caption{Pendulum frontier at $w_a = 10.0$: jam-time-per-failure
in seconds, mean $\pm$ cross-seed std over three seeds and $100$
evaluation episodes per seed.\textsuperscript{$\dagger$} No Pendulum
defender admits a \emph{strategic} learned adversary in the sense of
Sec.~\ref{ssec:setup} (learned never falls below greedy here), so
no row is bold; contrast Tables~\ref{tab:cartpole}
and~\ref{tab:quadrotor}.}
\label{tab:pendulum}
\centering
\footnotesize
\setlength{\tabcolsep}{4pt}
\begin{tabular}{lccc}
\toprule
defender & periodic & greedy & learned\\
\midrule
LQR (periodic)      & 1.400\textsuperscript{$\dagger$} & 1.076\textsuperscript{$\dagger$} & $1.136 \pm 0.068$ \\
RL-STC $w_c{=}1.0$  & 1.548 & 1.044 & $1.064 \pm 0.100$ \\
RL-STC $w_c{=}6.0$  & 1.986 & 1.076 & $1.119 \pm 0.050$ \\
RL-STC $w_c{=}14.0$ & 2.572 & 1.048 & $1.130 \pm 0.073$ \\
\bottomrule
\end{tabular}
\end{table}

Per-defender $\msictrl$: LQR $0.050$~s, $w_c \in \{1.0, 6.0,
14.0\}$ at $\{0.276, 0.398, 0.382\}$~s.
The continuous-jam optimum has a structural cause: under
held-actuator MAC drop, $\msictrl$ is irrelevant during a jam
window, so the cheapest crash blocks continuously until the
Pendulum falls. Learned and greedy both settle near this
lower bound.
\textit{$\dagger$Periodic and greedy fail to crash LQR reliably
(failure rate $3\%$ and $87\%$); all other cells achieve
$100\%$ failure.}

\subsection{CartPole}

Table~\ref{tab:cartpole} reports the CartPole frontier at
$w_a = 10.0$. Three of four defenders are self-stable; $w_c =
6.0$ falls on its own in $31\%$ of NoJam episodes, so its
learned-vs-baseline comparison is relative. Two of the four defenders admit a
\emph{strategic} learned adversary under the $\jtf$ gate; on the other
two it is faster than greedy but its cross-seed spread exceeds the
margin:

\textit{LQR:} learned $0.876 \pm 0.033$ s vs greedy $0.889$ s
($1.5\%$ faster). Periodic $4\tau_{\min}$ bursts fall inside the
LQR sampling-rate margin and never crash ($0\%$), so learned is
the only reliable LQR crasher.

\textit{$w_c = 1.0$:} learned $0.545 \pm 0.188$ s vs periodic
$0.592$, greedy $0.890$: $8\%$ / $39\%$ faster, jam fraction
$68.24 \pm 11.35\%$ (vs $69\%$ / $99\%$).

\textit{$w_c = 6.0$:} learned $0.320 \pm 0.000$ s matches
periodic $0.342$ with tighter variance and beats greedy $0.861$
by $2.7\times$.

\textit{$w_c = 14.0$:} learned $0.396 \pm 0.014$ s vs periodic
$0.606$, greedy $0.906$: $35\%$ / $2.3\times$ faster, jam
fraction $52.14 \pm 2.26\%$ (vs $49\%$ / $99\%$).

\begin{table}[t]
\caption{CartPole frontier at $w_a = 10.0$: jam-time-per-failure
in seconds, mean $\pm$ cross-seed std over three seeds and $100$
evaluation episodes per seed.\textsuperscript{$\dagger$} Bold marks a defender on which the learned adversary is \emph{strategic} in the sense of Sec.~\ref{ssec:setup}.}
\label{tab:cartpole}
\centering
\footnotesize
\setlength{\tabcolsep}{4pt}
\begin{tabular}{lccc}
\toprule
defender & periodic & greedy & learned\\
\midrule
LQR (periodic)      & --\textsuperscript{$\dagger$} & 0.889\textsuperscript{$\dagger$} & $0.876 \pm 0.033$ \\
RL-STC $w_c{=}1.0$  & 0.592 & 0.890 & $0.545 \pm 0.188$ \\
RL-STC $w_c{=}6.0$  & 0.342 & 0.861 & $\mathbf{0.320 \pm 0.000}$ \\
RL-STC $w_c{=}14.0$ & 0.606 & 0.906 & $\mathbf{0.396 \pm 0.014}$ \\
\bottomrule
\end{tabular}
\end{table}

Per-defender $\msictrl$: LQR $0.040$~s, $w_c \in \{1.0, 6.0,
14.0\}$ at $\{0.087, 0.201, 0.316\}$~s. NoJam failure rate is
$0\%$ except $w_c = 6.0$ ($31\%$, so its learned-vs-baseline
comparison is relative).
\textit{$\dagger$ Periodic fails to crash LQR (failure rate $0\%$) and
predicate-greedy reaches $95\%$ there; every other cell achieves $100\%$
failure.}

Fig.~\ref{fig:cartpole_traj} compares four CartPole episodes at
$w_c = 14.0$, $w_a = 10$: no-adversary $\theta$ stays near zero;
periodic bursts crash at $1.32$~s; predicate-greedy jams
continuously and crashes at $0.64$~s; the learned adversary
idles $\sim 0.24$~s while $\theta$ drifts under the defender,
then commits one sustained burst that crashes at $0.56$~s. The
learned policy is neither schedule-driven nor reactive but
state-timed.

The CartPole panel of Fig.~\ref{fig:wa_frontier} shows the
strategic plateau on $w_c = 14.0$ persisting across the sweep
and extending to $w_c = 1.0$; both stay below the greedy line
for every $w_a \ge 0.1$. LQR sits at $\rhojam^{\mathrm{fail}}
\approx 100\%$ (no exploitable timing structure) but learned
still crashes $1.5\%$ faster on jam time. $w_c = 6.0$ enters
the below-greedy regime from $w_a = 2$.

\begin{figure*}[t]
\centering
\includegraphics[width=\textwidth]{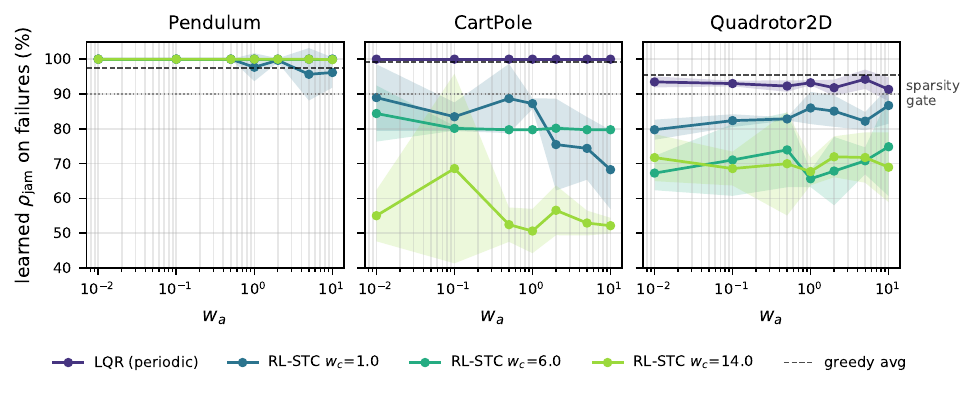}
\caption{Cross-plant $w_a$ frontier. Each panel shows learned
adversary jam fraction on crashing episodes versus jam-activity
weight, per defender, with shaded band the three-seed standard
deviation. The dashed line in each panel is the per-plant mean
greedy jam fraction; the dotted line is the $90\%$ sparsity gate.
Pendulum (left) keeps all defenders close to continuous jamming;
CartPole (middle) sparsifies all three RL-STC defenders below the gate
at $w_a = 10$ ($68$, $80$ and $52\%$); Quadrotor2D (right) sparsifies three RL-STC
defenders across most of the sweep and pulls even the LQR row
toward the gate.}
\label{fig:wa_frontier}
\end{figure*}

\begin{figure*}[t]
\centering
\includegraphics[width=\textwidth]{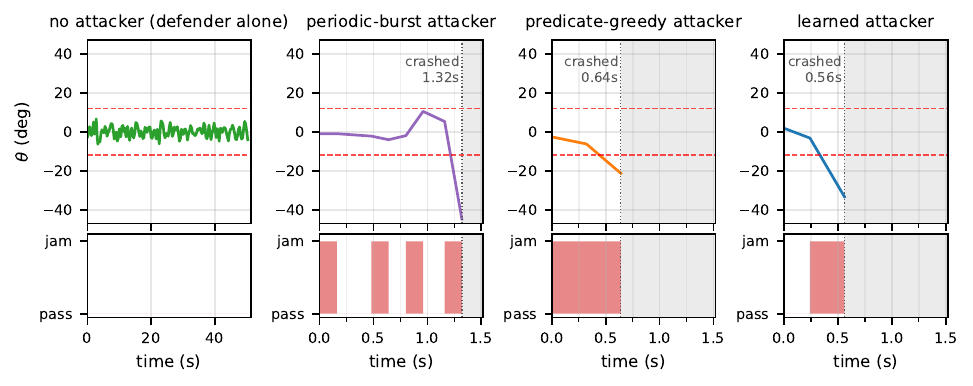}
\caption{Representative CartPole episodes at $w_c = 14.0$,
$w_a = 10$. Left to right: no adversary (stable over $50$ s);
periodic-burst (crash at $1.32$ s, $49\%$ jam fraction);
predicate-greedy (continuous jam, crash at $0.64$ s, $100\%$
fraction); learned (idle then sustained burst, crash at $0.56$ s,
$57\%$ fraction). Top row: $\theta$ vs time; dashed line is the
constraint $|\theta|_{\max} = 12^\circ$. Bottom row: jam mask.
Adversary panels share a time axis; shading marks post-crash
region. The no-adversary panel uses its own axis (episode two
orders of magnitude longer than any crash).}
\label{fig:cartpole_traj}
\end{figure*}

\subsection{Quadrotor2D}

Table~\ref{tab:quadrotor} reports the Quadrotor2D frontier at
$w_a = 10.0$. All four defenders are self-stable under NoJam
($0\%$ failure); two admit strategic learned adversaries. LQR
shows the largest learned-vs-baseline gap in the paper.

\textit{LQR:} learned $0.755 \pm 0.099$ s vs periodic $1.994$
and greedy $2.074$: $2.7\times$ faster than greedy, and the
only reliable LQR crasher ($100\%$ vs greedy $58\%$). Jam
fraction $91.31 \pm 3.41\%$ (periodic $82\%$, greedy $99.8\%$).

\textit{$w_c = 14.0$:} learned $0.654 \pm 0.195$ s vs periodic
$0.920$, greedy $1.830$: $29\%$ / $2.8\times$ faster, jam
fraction $68.95 \pm 10.01\%$. Its spread is $73\%$ of the margin to
periodic, the loosest of the four strategic cells (the others are under
$10\%$), so it is the weakest of the four.

\textit{$w_c \in \{1.0, 6.0\}$:} both learned adversaries crash
at $100\%$ but at $\jtf$ higher than periodic ($1.068$~s and
$1.237$~s vs.\ $0.602$~s and $0.637$~s). Their short
$\msictrl$ ($0.108$~s, $0.206$~s) keeps the plant close to
equilibrium, and the state-uniform reward shaping
(Sec.~\ref{ssec:reward}) trades strategic-timing performance
here for uniform crash reliability across the state space
(discussed in Sec.~\ref{sec:limitations}). Both cells still
beat greedy by $31$--$33\%$ on $\jtf$ and by $9$--$18$~pp on
jam fraction.

\begin{table}[t]
\caption{Quadrotor2D frontier at $w_a = 10.0$: jam-time-per-failure
in seconds, mean $\pm$ cross-seed std over three seeds and $100$
evaluation episodes per seed.\textsuperscript{$\dagger$} Bold marks a defender on which the learned adversary is \emph{strategic} in the sense of Sec.~\ref{ssec:setup}.}
\label{tab:quadrotor}
\centering
\footnotesize
\setlength{\tabcolsep}{4pt}
\begin{tabular}{lccc}
\toprule
defender & periodic & greedy & learned\\
\midrule
LQR (periodic)      & 1.994 & 2.074\textsuperscript{$\dagger$} & $\mathbf{0.755 \pm 0.099}$ \\
RL-STC $w_c{=}1.0$  & 0.602 & 1.558 & $1.068 \pm 0.868$ \\
RL-STC $w_c{=}6.0$  & 0.637 & 1.846 & $1.237 \pm 0.750$ \\
RL-STC $w_c{=}14.0$ & 0.920 & 1.830 & $\mathbf{0.654 \pm 0.195}$ \\
\bottomrule
\end{tabular}
\end{table}

Per-defender $\msictrl$: LQR $0.040$~s, $w_c \in \{1.0, 6.0,
14.0\}$ at $\{0.108, 0.206, 0.257\}$~s. NoJam failure rate is
$0\%$ across defenders.
\textit{$\dagger$ Predicate-greedy crashes LQR on only $58\%$
of episodes; learned achieves $100\%$ failure on every defender.}

The Quadrotor2D LQR result is the largest learned-vs-baseline
gap in the paper. The reason is multi-mode: termination fires on
any of $|\theta|$, $|x|$, or $|z|$ exceeding threshold, giving
three destabilization routes whose joint coverage admits timing
structure even against continuous-sampling LQR. Neither periodic
nor greedy exploits this structure. On short-$\msictrl$ RL-STC
defenders ($w_c \in \{1.0, 6.0\}$), periodic's burst cadence
happens to align with the defender's sampling, producing shorter
$\jtf$ than the learned adversary.

\subsection{Deploy-realism ablation (Quadrotor2D)}

The three plants above establish
cross-plant generality. We now stress-test the framework on
Quadrotor2D under the sensing constraints a real drone-jamming
adversary would face: noisy radio-frequency (RF) or visual state
estimates, and velocity-blind observation when the adversary
lacks derivative estimators. We re-train the learned adversary
on Quadrotor2D $w_c = 14.0$, $w_a = 10$ under two perturbations
mirroring these constraints: (i)~Gaussian noise
$\mathcal{N}(0, \sigma_{\mathrm{obs}}^2)$ per state component,
$\sigma_{\mathrm{obs}} \in \{0, 0.01, 0.05, 0.1, 0.2\}$, the largest
exceeding the mean magnitude of every state component at reset
($\le 0.17$); and
(ii)~position-only, masking velocity components
$(\dot x, \dot z, \dot\theta)$ to zero. Periodic is
observation-blind (invariant $\jtf$); predicate-greedy uses the
same perturbed observation as learned. Three training seeds per
configuration.

\begin{table}[t]
\caption{Quadrotor2D robustness ablation on $w_c = 14.0$,
$w_a = 10$, jam-time-per-failure in seconds. All adversaries see the
same perturbed observation channel; learned reports mean
$\pm$ cross-seed standard deviation over three training seeds.
Failure rate is $100\%$ in every row; bold marks the learned adversary
beating both baselines, which it does in all six.}
\label{tab:robustness}
\centering
\footnotesize
\setlength{\tabcolsep}{4pt}
\begin{tabular}{lccc}
\toprule
condition & periodic & greedy & learned\\
\midrule
clean (reference)              & 0.920 & 1.830 & $\mathbf{0.654 \pm 0.195}$ \\
$\sigma_{\mathrm{obs}} = 0.01$ & 0.920 & 1.856 & $\mathbf{0.532 \pm 0.097}$ \\
$\sigma_{\mathrm{obs}} = 0.05$ & 0.920 & 1.862 & $\mathbf{0.471 \pm 0.123}$ \\
$\sigma_{\mathrm{obs}} = 0.10$ & 0.920 & 1.856 & $\mathbf{0.606 \pm 0.047}$ \\
$\sigma_{\mathrm{obs}} = 0.20$ & 0.920 & 1.747 & $\mathbf{0.631 \pm 0.037}$ \\
position-only                  & 0.920 & 1.811 & $\mathbf{0.681 \pm 0.091}$ \\
\bottomrule
\end{tabular}
\end{table}

Failure rate stays at $100\%$ everywhere
(Table~\ref{tab:robustness}). Learned strictly beats both
baselines on $\jtf$ in all six conditions, holding at
$0.47$--$0.68$~s vs.\ periodic $0.92$~s and greedy
$1.75$--$1.86$~s, with cross-seed variance $\le 0.20$~s. The
framework remains deployable under realistic drone-adversary
sensing constraints.

\subsection{Discussion}

\textit{Plants stratify on reliability and strict $\jtf$
dominance.} Learned crashes all $12$ (plant, defender) cells at
$100\%$; baselines miss (greedy $42\%$ on Quadrotor2D LQR,
periodic $97\%$ on Pendulum LQR). On $\jtf$ learned strictly
beats the best baseline on $4/4$ CartPole cells and $2/4$
Quadrotor2D cells (largest gap $2.6\times$ on LQR, against its best
baseline) and matches
greedy on Pendulum. The stratification tracks constraint width
(single-scalar for Pendulum/CartPole vs.\ disjunctive for
Quadrotor2D) and near-equilibrium trajectory density
(short-$\msictrl$ RL-STC on Quadrotor2D has wider $\jtf$
variance; see the reward-shaping trade-off below).

\textit{Continuous jamming is the count-budget optimum against
conventional LQR.} \cite{zhang2015optimal, qin2018optimal} prove
consecutive-grouping optimality under count-budget and
packet-dropping models. Against periodic LQR, the learned
adversary settles into this continuous-jam optimum on Pendulum
and CartPole; only Quadrotor2D's disjunctive constraint breaks
the single-mode geometry and lets learned beat continuous.
Against RL-STC, the defender's learned timing exposes strategic
windows even on single-scalar plants (CartPole).

\textit{Learned timing dominates greedy on $\jtf$; defender
structure dominates $\msictrl$ in attackability.} Learned is strictly faster than greedy on all four CartPole
cells ($1.01$--$2.7\times$) and on Quadrotor2D LQR ($2.7\times$), and matches greedy within measurement precision on
Pendulum, where the two-state dynamics leave predicate-greedy
near-optimal. Against periodic it is faster on $9/12$ cells and
slower on the two short-$\msictrl$ Quadrotor2D cells ($w_c \in
\{1.0, 6.0\}$), where periodic's fixed cadence aligns with the
sampling rate. CartPole LQR is not comparable, since periodic
never crashes there. CartPole $w_c = 14.0$ stays
below greedy across $w_a \in [0.1, 10]$ while $w_c = 6.0$
enters the below-greedy regime only from $w_a = 2$, despite
similar $\msictrl$; the defender's internal policy structure,
not its nominal communication budget, drives what the adversary
can exploit.

\textit{Trade-off in state-uniform reward shaping.}
\label{sec:limitations}
$r^a_{\mathrm{stab}}$ applies uniformly across the state space
to mirror the defender's RTA. Near equilibrium the graded reward
$V(\xkk)/\Vscale$ is small while the $\pm 1$ shaping term still
fires on any drift, incentivizing continuous near-equilibrium
jamming. On the two Quadrotor2D cells with the largest bias
($w_c \in \{1.0, 6.0\}$), learned crashes at $100\%$ but at
$1.6\times$ the $\jtf$ and $5.5\times$ the cross-seed spread of that
plant's two strategic cells. A potential-based reward that
vanishes at safe equilibrium is left for future work. Three further
limits bound the scope. The count bound of Sec.~\ref{ssec:bridge} is
stated on the linearization, and Table~\ref{tab:bridge-validation}
measures the nonlinear plant (Remark~\ref{rem:affine}); the adversary is a single DQN
architecture evaluated on three plants; and on the Pendulum it is
reliable rather than strategic, matching predicate-greedy on $\jtf$.

\section{Conclusion and Future Work}
\label{sec:future}

We presented a self-triggered adversary that inverts the
RL-STC defender of \cite{haroon2026learning} and is
defender-agnostic: it treats any fixed defender as a
closed-loop black box and applies the same training pipeline.
Under the hold-last MAC model, jam time, attack energy, and jam
fraction are a single resource, and the $w_a$ sweep traverses
the energy budget of the optimal-DoS literature. We validated
the formulation on periodic LQR and three RL-STC defenders
across Pendulum, CartPole, and Quadrotor2D; on Quadrotor2D,
the framework retains $100\%$ failure rate and strict dominance
over predicate-greedy under realistic drone-adversary sensing
constraints. Two follow-ons are immediate.

\textit{Co-training as a two-player Markov game.} Replace the
fixed defender with an RL-STC agent learning concurrently
against the adversary; this tests whether the defender can
learn timing patterns the adversary cannot exploit.

\textit{Predicate as action mask.} The admissibility predicate
defines the greedy baseline and shapes the adversary reward but
is not enforced on the learned adversary. Hard-masking
inadmissible jams would mirror the defender's RTA shield and
isolate the sparsity gains from reward shaping (with the caveat
in Sec.~\ref{ssec:reward} that a hard mask forbids
equilibrium-time jams needed to drive the system out).


\bibliographystyle{IEEEtran}
\bibliography{refs_acc}

\end{document}